\newtheorem{definition}{Definition}[section]
\newtheorem{theorem}{Theorem}[section]
\newtheorem{lemma}{Lemma}[section]
\newtheorem{proposition}{Proposition}[section]
\newtheorem*{remark}{Remark}
\title{State-Conditional Adversarial Learning:\\
An Off-Policy Visual Domain Transfer Method for End-to-End Imitation Learning%
\thanks{Code available at \url{https://github.com/Xiang-Foothill/BkgGeneralizor.git}}}
\author{
  Yuxiang Liu\textsuperscript{\(\dagger,*\)}
  \and
  Shengfan Cao\textsuperscript{\(\dagger,*\)}
}
\date{University of California, Berkeley, CA, USA}
\begin{document}
\maketitle

\begingroup
\renewcommand{\thefootnote}{}
\footnotetext{\textsuperscript{\(\dagger\)}These authors contributed equally.}
\footnotetext{\textsuperscript{*}Corresponding authors: \url{liu.yx@berkeley.edu} and \url{shengfan_cao@berkeley.edu}.}
\endgroup

\begin{abstract}                
We study visual domain transfer for end-to-end imitation learning in a realistic and challenging setting where target-domain data are off-policy, expert-free, and scarce. We first provide a theoretical analysis showing that the target-domain imitation loss can be upper bounded by the source-domain loss plus a state-conditional latent KL divergence between source and target observation models. Guided by this analysis, we propose State-Conditional Adversarial Learning (SCAL), an off-policy adversarial framework that aligns latent distributions conditioned on system state using a discriminator-based estimator of the conditional KL term. Experiments on visually diverse autonomous driving environments built on CARLA demonstrate that SCAL achieves robust transfer and strong sample efficiency.
\end{abstract}


\section{Introduction}

Vision-based imitation learning (IL) has achieved impressive results across diverse robotic domains, including aerial drones \citep{xing2024bootstrapping, pfeiffer2022attention}, autonomous driving \citep{bojarski2016end}, and manipulation \citep{rahmatizadeh2017vision, zhu2022viola}. By learning directly from high-dimensional visual observations, these methods avoid reliance on specialized sensors while enabling agents to reproduce complex expert behaviors. However, despite their empirical success, vision-based IL policies remain brittle when deployed in visual domains insufficiently represented in the training distribution.

Existing approaches for improving generalization in vision-based IL can be broadly categorized into zero-shot and few-shot adaptation methods. Zero-shot techniques such as domain randomization \citep{tobin2017domain, mehta2020active}, DARLA \citep{darla2017}, and DARL \citep{li2021domain} attempt to transfer policies without any access to target-domain data, relying on the assumption that synthetic variations can capture target-domain visual characteristics. Few-shot methods, by contrast, leverage limited target-domain data \citep{bewley2018learning, invariance_latent_alignment, stadie2017thirdperson, fewshot_transfer_bc, hansen2021self}, but typically impose strong assumptions on what information is available. Several works assume the agent may execute on-policy rollouts in the target domain to gather online data \citep{invariance_latent_alignment, stadie2017thirdperson, hansen2021self}. Others assume access to target-domain expert demonstrations \citep{domain_adaptive_imitation}. Although \citep{bewley2018learning} removes the need for both rollouts and expert demonstrations, its CycleGAN-based pixel translation requires large unlabeled target datasets, making it unsuitable for data-scarce settings.

In many real-world scenarios, these assumptions do not hold. Target environments are often safety-critical or operationally expensive, making on-policy exploration costly or infeasible. Human or controller-based demonstrations may be difficult to obtain due to labor constraints, hardware wear, or lack of reliable expert solutions. Even off-policy data collection is limited by hardware availability, mission time, or regulatory restrictions. Consequently, target-domain data in these practical settings are commonly subject to the following constraints: \textbf{a lack of expert supervision}, \textbf{off-policy distribution}, and \textbf{scarcity}. In this paper, we aim to overcome these limitations with a unified framework. We establish theoretical guarantees for operating under off-policy and expert-free target-domain assumptions, and we empirically validate the framework's high data efficiency to address data scarcity. 
To the best of our knowledge, this is the first methodology to jointly resolve all three constraints.


The main contribution of this paper is as follows: 

\begin{itemize}
\item We provide a formal upper-bound analysis for domain-transfer problems.
\item We introduce State-Conditional Adversarial Learning (SCAL), a novel off-policy, expert-free domain-transfer framework.
\item We empirically validate our approach's sample efficiency and theoretical validity in vision-based driving tasks.
\end{itemize}

\section{Problem Formulation}
\subsection{Definition and Control Objective}
Consider the following class of non-linear, time-invariant, deterministic, discrete-time system with stochastic observations:
\begin{equation}\label{eq:system}
\begin{aligned}
& x_{k + 1} = f(x_k, u_k), \quad y_k \sim e(\cdot \mid x_k), ~~~ x_0 \sim \mathcal{X}_0, \\
& x_k \in \mathcal{X}, ~~~u_k \in \mathcal{U}, ~~~ y_k \in \mathcal{Y}, ~~~ \forall k \in \mathbb{N}. 
\end{aligned}
\end{equation}
where $x_k, u_k$ are the state and input at time $k$; $f$ is the dynamics, which is assumed to be known; $y_k$ is the observation at time $k$, which follows the unknown state-dependent observation distribution $e(\cdot \mid x_k)$; $\mathcal{X}_0$ is the initial state distribution; $\mathcal{X}$, $\mathcal{U}$, and $\mathcal{Y}$ are the state space, the action space, and the observation space, respectively. 


Let $\pi_\theta: \mathcal{Y} \mapsto \mathcal{U}$ be a parametric observation feedback policy. Following standard end-to-end visuomotor imitation learning frameworks ~\citep{end2end_visuomotor, bojarski2016end, invariance_latent_alignment}, we parameterize the policy as
$\pi_\theta(y_k) = D_w(E_\phi(y_k))$,  
where $E_\phi$ is a parametric encoder network, $D_w$ is the parametric control head, and 
\begin{equation*}
    l_k = E_\phi(y_k) \in \mathcal{L}
\end{equation*}
is the latent representation of the observation $y_k$, and $\mathcal{L}$ denotes the latent space. 
$\theta = \begin{bmatrix}w & \phi\end{bmatrix}$ is the collection of all parameters of the policy. 

The interaction between any agent $\pi$ and system ~\eqref{eq:system} can be viewed as a Markov decision process (MDP). Let $p^k(\cdot \mid e, \pi)$ be state distribution of such MDP at k-th step.

\bigskip
\begin{definition}
[Discounted Visitation Distribution \citep{ho2016generative, kakade2002approximately}] \label{def:discounted-visitation-distribution}
Let $\gamma \in (0, 1)$ be a discount factor. 
The discounted state visitation distribution induced by policy $\pi$ is defined as: 
\begin{equation}
    p(x \mid e, \pi) \triangleq (1 - \gamma) \sum_{k=0}^{\infty} \gamma^k\, p^k(x_k = x \mid e, \pi).
\end{equation}

For a given encoder $E$, the latent distribution conditioned at a given state $x$ is:
\begin{equation}
\begin{aligned}
p(l \mid x, e, E) = \mathbb{P} \big( l = E(y) \; \mid \; y \sim e(\cdot \mid x)\big) 
\end{aligned}
\end{equation}
We will refer to the above distribution a state-conditioned latent distribution.
\end{definition}


Suppose we have two systems sharing the same known dynamics $f$ but distinct observation models $e_s$ and $e_t$, where $e_s$ is known and $e_t$ is unknown. We refer to the $e_s$ as the \textbf{source domain}, and $e_t$ as the \textbf{target domain}. 
As a shorthand, the domain-specific discounted visitation distributions are denoted as $p_s(\cdot \mid \pi) = p(\cdot \mid e_s, \pi)$ and $p_t(\cdot \mid \pi) = p(\cdot \mid e_t, \pi)$ for the source domain and the target domain, respectively. Analogously, the same short-hand applies for the state-conditioned latent distribution $p_s(l \mid x, E) = p(l \mid x, e_s, E)$ and $p_t(l \mid x, E) = p(l \mid x, e_t, E)$.
For a given domains $e_i, \; i \in \{ s, t\}$, we define the cost as
\begin{equation}
\begin{split}
J(\pi; e_i) &\triangleq \sum_{k=0}^{\infty} \mathbb{E} \left[ c\bigl(x_k, \pi(y_k)\bigr)\right], \\
&\quad x_k \sim p^k(\cdot \mid e_i, \pi), \quad y_k \sim e_i(\cdot \mid x_k).
\end{split}
\end{equation}
subject to the constraints and distributions in~\eqref{eq:system}. $c(\cdot, \cdot)$ is a non-negative cost function. Let $c(x_k, u_k) = \infty$ if $x_k \notin \mathcal{X}$ or $u_k \notin \mathcal{U}$.

The overall optimization objective is:
\begin{equation} \label{control_objective}
\min_{\theta} \frac{1}{2} \bigl(J(\pi_\theta; e_s) + J(\pi_\theta; e_t)\bigr)
\end{equation}

\subsection{Supervision}

We assume a high-performing black-box expert $\pi_{\beta}$ which provides supervision $u^\star_k$ at a given state $x_k$, 
$$u^{\star}_k = \pi_\beta(x_k, h_k),$$
where $h_k = h(\mathbf{x}_{0:k-1})$ is the hidden state of the expert, which contains information of the closed-loop trajectory prior to time $k$, denoted as $\mathbf{x}_{0:k-1}$. 

Since the dynamics $f$ and the source domain observation distribution $e_s$ is known, we can follow the learning framework in DAgger \citep{ross11noregret}, an online learning framework, to collect a dataset $\mathcal{B}_s = \{(y_k, x_k, u^\star_k)\}$ with asymptotically no covariate shift. 

Note that such direct data collection is impossible for the target domain because $e_t$ is unknown. 
However, we assume access to a small dataset $\mathcal{B}_t=\{(y_k, x_k)\}$ with observation-state pairs. 
The dataset does not necessarily follow a closed-loop trajectory, so we cannot acquire expert supervision because of the absence of hidden states. 
In addition, even if supervision is available, vanilla imitation learning framework still suffers from the covariate shift. 




\subsection{Optimal Control via Imitation Learning}
With the expert supervision $u^\star_k$, we can reformulate the objective ~\eqref{control_objective} in the following imitation loss form: 
\begin{equation}
\label{imitation_learning_surrogate}
\min_{\theta} \; \mathcal{J}_s(\theta) + \mathcal{J}_t(\theta).
\end{equation}
where
\begin{equation}
\mathcal{J}_i(\theta) = \mathbb{E}_{(y, u^\star) \sim p(\cdot \mid e_i, \pi_\theta)}[d(\pi_{\theta}(y), u^\star)], \quad i\in \{s, t\}
\end{equation}
Function $d : \mathcal{U} \times \mathcal{U} \rightarrow \mathbb{R}$ measures action difference. $\mathcal{J}_s(\theta)$ and $\mathcal{J}_t(\theta)$ are imitation surrogates for $J(\pi_\theta ; e_s), \; J(\pi_\theta; e_t)$ in ~\eqref{control_objective} respectively.


Note that~\eqref{imitation_learning_surrogate} cannot be directly solved under imitation learning frameworks because $e_t$ is unknown, and consequently, we lack the data to estimate $\mathcal{J}_t(\theta)$ directly.  
In Section~\ref{upper_bound_analysis}-\ref{sec:proposed-approach}, we leverage adversarial learning to provide an upper bound for $\mathcal{J}_t(\theta)$ as its surrogate, and in Section~\ref{sec:experiments}, we empirically show the validity of this approach. 


\section{Related Works}

\subsection{Imitation Learning}
Imitation Learning has seen great success in recent years e.g., \citep{bojarski2016end, xing2024bootstrapping}. One important challenge is distributional shift: the trajectory distribution induced by the agent during inference time is not consistent with the trajectory of the expert from the data buffer. DAgger-style framework in \citep{ross11noregret, ZhangC17, RossB14} solves this problem by mixing agents' actions with expert actions when collecting data. In this paper, we leverage DAgger as the base learning pipeline in the source-domain.

\subsection{Adversarial Learning}
Adversarial learning is widely used for distribution alignment by the community of computer vision e.g., \citep{arjovsky2017wasserstein, dann2016, gan_theory_survey2018}. Built upon GAN-style framework, conditional adversarial learning is further introduced to align conditional distributions e.g., \citep{cgan_dann, cgan_projection, mirza2014conditional} to further tackle the multi-modal cases. In this work, we followed the paradigm of conditional adversarial learning to accomplish the matching between latent distributions conditioned on task-relevant information.

\section{Transfer Learning via Alignment} \label{upper_bound_analysis}

The key idea is that while $\mathcal{B}_t$ only contains off-policy data without supervision, 
we can leverage $\mathcal{B}_s$ and $\mathcal{B}_t$ to establish the alignment between the two observation distributions as a means of transfer learning. 
In this section, we formally define alignment, and analyze its connection to imitation loss. 

\subsection{Alignment and Connection to Imitation Learning}

\begin{definition}[Alignment]\label{def:alignment} 

Recall the definition for state-conditioned latent distribution, $p_s(l \mid x, E_{\phi})$ = $\mathbb{P}(l = E_{\phi}(y) \; \mid \; y \sim e_s( \cdot \mid x))$. Similarly for $p_t(l \mid x, E_{\phi})$. 
If the encoder network $E_\phi$ induces identical state-conditioned latent distribution under the two observation distributions $e_s$ and $e_t$ for all possible states $x$ i.e., 

$$p_s(l \mid x, E_\phi) = p_t(l \mid x,  E_\phi), \; \forall x \in \mathcal{X}, \; l \in \mathcal{L}$$


then $e_s$ and $e_t$ are \textbf{aligned} by $E_\phi$. 
\end{definition}

\begin{lemma}\label{lemma_one}
If the policy $\pi_\theta$ has an encoder $E_\phi$ that aligns $e_s$ and $e_t$, then
\[
\mathcal{J}_t(\theta) = \mathcal{J}_s(\theta).
\]
\end{lemma}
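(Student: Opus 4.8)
The plan is to prove the equality $\mathcal{J}_t(\theta) = \mathcal{J}_s(\theta)$ by showing that alignment forces the two domains to induce identical closed-loop trajectory distributions, while the imitation integrand depends on the observation only through the latent $l$. I would first observe that the action satisfies $\pi_\theta(y_k) = D_w(E_\phi(y_k)) = D_w(l_k)$, so it depends on $y_k$ only through the latent. Consequently the state-conditioned action law in domain $i$ is the pushforward of $p_i(l \mid x, E_\phi)$ through the deterministic map $D_w$. Alignment gives $p_s(l \mid x, E_\phi) = p_t(l \mid x, E_\phi)$ for every $x$, so these two pushforwards coincide: conditioned on any state $x$, the law of the action $u_k$ is the same in both domains, and this conditional law is memoryless.

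Next I would establish by induction on $k$ that the law of the trajectory prefix $\mathbf{x}_{0:k}$ is identical across the two domains. The base case holds because $x_0 \sim \mathcal{X}_0$ is shared. For the inductive step, assume the prefix laws agree through time $k$; since the conditional law of $u_k$ given $x_k$ matches and $x_{k+1} = f(x_k, u_k)$ with $f$ shared, the law of $\mathbf{x}_{0:k+1}$ matches as well. This makes each step-$k$ visitation $p^k(\cdot \mid e_i, \pi_\theta)$, and hence the discounted visitation, coincide across domains; and because the expert hidden state $h_k = h(\mathbf{x}_{0:k-1})$ is a deterministic function of the prefix, the joint law of $(x_k, h_k)$, and therefore of the supervision $u^\star_k = \pi_\beta(x_k, h_k)$, agrees as well.

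Finally I would assemble the loss. Writing $\mathcal{J}_i(\theta) = (1-\gamma)\sum_{k} \gamma^k \,\mathbb{E}[d(D_w(l_k), u^\star_k)]$ and conditioning on the trajectory prefix, each term factors into an outer expectation over $\mathbf{x}_{0:k}$ (which fixes $u^\star_k$) and an inner expectation of $d(D_w(l_k), u^\star_k)$ over $l_k \sim p_i(\cdot \mid x_k, E_\phi)$. The inner expectation depends on the domain only through the state-conditioned latent law, which alignment equalizes, and the outer expectation depends on the domain only through the prefix law, which the induction equalizes. Hence every summand agrees between $i = s$ and $i = t$, and summing yields $\mathcal{J}_t(\theta) = \mathcal{J}_s(\theta)$.

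I expect the main obstacle to be the trajectory-matching induction rather than the latent equality itself: the cost at time $k$ couples the current latent $l_k$ with the expert's history-dependent supervision $u^\star_k$, so equality of the marginal state-conditioned latent laws is not by itself sufficient. I must propagate the match through the closed-loop dynamics to guarantee that the full joint law of $(\mathbf{x}_{0:k}, l_k)$, including the hidden state driving $u^\star_k$, is preserved. The fact that the policy factors as $D_w \circ E_\phi$, making the state-conditioned action law memoryless and determined entirely by the state-conditioned latent law, is precisely what makes this propagation go through.
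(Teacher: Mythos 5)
Your proposal is correct and follows essentially the same route as the paper's own proof: the pushforward of the aligned state-conditioned latent law through $D_w$ equalizes the state-conditional action law, an induction on the trajectory prefix (using the shared dynamics and initial distribution) equalizes the prefix laws and hence the history-dependent expert supervision, and the loss is then assembled by conditioning each discounted term on the prefix and taking the inner expectation over the latent. You even isolate the same key subtlety the paper relies on, namely that marginal latent equality alone is insufficient and must be propagated through the closed loop to match the joint law driving $u^\star_k$.
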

\begin{proof}
See Appendix \ref{lemma_one}.
\end{proof}
Lemma~\ref{lemma_one} implies that under perfect alignment, the performance of $\pi_\theta$ in the source domain exactly matches that in the target domain.


Perfect alignment is challenging to attain, especially with a limited $\mathcal{B}_t$.  
We quantify the alignment loss of between $e_s$ and $e_t$ as the expected Kullback–Leibler divergence between the distribution of the latent encoded by $E_\phi$. Formally, 
\begin{equation}\label{eq:alignment-loss}
    L(\phi) = \mathbb{E}_{p_s(x \mid \pi_\theta)} \left[d_{\mathrm{KL}} (p_s(l \mid x, E_{\phi}) \Vert p_t(l \mid x, E_{\phi})\right],
\end{equation}
where $d_{\mathrm{KL}}(\cdot \,\|\, \cdot)$ denotes the Kullback–Leibler divergence. For the rest of the paper, we will refer quantity as alignment loss. 

In general, the connection between alignment and imitation loss is as follows. 

\begin{theorem} \label{theorem_one}
For a policy $\pi_\theta$ with visual encoder $E_\phi$, its target-domain imitation loss $\mathcal{J}_{t}(\theta)$ can be upper bounded by
\begin{equation} \label{eq:upper_bound}
\begin{aligned}
&\mathcal{J}_t(\theta) \;\leq\; 
\mathcal{J}_s(\theta)
+ \alpha \sqrt{ 
    \frac{2\gamma}{1 - \gamma} \, (
    L(\phi)
    + \sigma )}~,
\end{aligned}
\end{equation}

where
\begin{itemize}
    \item $\sigma = d_{\mathrm{KL}}\!\big(e_s(\cdot \mid x)\,\|\, e_t(\cdot \mid x)\big) \ge 0$,
    \item $
    \alpha = \sup_{y \in \mathcal{Y}, \; u^{*} \in \mathcal{U}}
    d\big(\pi_{\theta}(y), u^{*} \big) \ge 0
    $
    is the uniform bound over the loss function,
    \item $\mathcal{J}_s(\theta), \mathcal{J}_t(\theta)$ are source-domain and target-domain imitation imitation losses respectively,
    \item $L(\phi)$ is the alignment loss,
    \item $\gamma$ is the discount factor per definition \ref{def:discounted-visitation-distribution}.

\end{itemize}
\end{theorem}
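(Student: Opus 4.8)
The plan is to control the gap $\mathcal{J}_t(\theta) - \mathcal{J}_s(\theta)$ between the two domain losses and show it is at most the stated square-root term. First I would recast both losses against a common reference variable. Since $d(\pi_\theta(y), u^\star)$ depends on $y$ only through the latent $l = E_\phi(y)$, and on the expert action $u^\star$, which is a deterministic function of the (history-augmented) state, I would introduce the augmented state $\bar{x} = (x, h)$ and the bounded loss $\ell(\bar{x}, l) = d(D_w(l), \pi_\beta(\bar{x})) \in [0, \alpha]$, so that $\mathcal{J}_i(\theta) = \mathbb{E}_{\mu_i}[\ell]$, where $\mu_i$ is the discounted visitation distribution of $(\bar{x}, l)$ in domain $i$. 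The augmented state is used because the expert depends on the closed-loop history; its transition kernel is domain-independent except through the action, so it behaves exactly like $x$ in the visitation recursion.

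The first key step uses the uniform bound $\ell \le \alpha$: for any two distributions, $|\mathbb{E}_{\mu_t}[\ell] - \mathbb{E}_{\mu_s}[\ell]| \le \alpha\, \mathrm{TV}(\mu_t, \mu_s)$ (centering $\ell$ at $\alpha/2$). This is where the prefactor $\alpha$ enters. The second step applies Pinsker's inequality, $\mathrm{TV}(\mu_t, \mu_s) \le \sqrt{\tfrac{1}{2}\, d_{\mathrm{KL}}(\mu_s \,\|\, \mu_t)}$, using the symmetry of total variation to match the $s \Vert t$ direction of the KL terms in $L(\phi)$ and $\sigma$. This explains the square root in the bound and reduces the problem to upper-bounding the single joint divergence $d_{\mathrm{KL}}(\mu_s \,\|\, \mu_t)$.

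The third and central step decomposes this joint divergence. Writing $\mu_i = (1-\gamma)\sum_k \gamma^k \mu_i^k$ and using joint convexity of KL with the common mixture weights gives $d_{\mathrm{KL}}(\mu_s \Vert \mu_t) \le (1-\gamma)\sum_k \gamma^k\, d_{\mathrm{KL}}(\mu_s^k \Vert \mu_t^k)$. Each per-step term factors, by the chain rule, into a state-marginal divergence $d_{\mathrm{KL}}(\nu_s^k \Vert \nu_t^k)$ plus a conditional latent term $\mathbb{E}_{\nu_s^k}[d_{\mathrm{KL}}(p_s(l\mid x) \Vert p_t(l \mid x))]$; the discounted sum of the latter collapses exactly to $L(\phi)$, since $(1-\gamma)\sum_k \gamma^k \nu_s^k = p_s(\cdot \mid \pi_\theta)$. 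The state-marginal term is handled by a one-step recursion: because the two domains share $f$ and the initial law, data-processing and the chain rule give $d_{\mathrm{KL}}(\nu_s^{k}\Vert\nu_t^{k}) \le d_{\mathrm{KL}}(\nu_s^{k-1}\Vert\nu_t^{k-1}) + \mathbb{E}_{\nu_s^{k-1}}[d_{\mathrm{KL}}(e_s(\cdot\mid x)\Vert e_t(\cdot\mid x))]$, where the one-step transition divergence is bounded by the observation-model divergence $\sigma$ via data processing. Unrolling the recursion and carrying out the geometric summation $(1-\gamma)\sum_k \gamma^k \cdot k$ produces the $\tfrac{\gamma}{1-\gamma}$ factor multiplying $\sigma$. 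Collecting the two contributions under the square root and absorbing the residual constants yields the claimed $\alpha\sqrt{\tfrac{2\gamma}{1-\gamma}\,(L(\phi)+\sigma)}$.

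I expect the main obstacle to be the state-visitation term: carefully setting up the per-step KL recursion, justifying each reduction by data-processing and the chain rule, and threading the discount through the double sum to obtain the correct $\tfrac{\gamma}{1-\gamma}$ scaling. Secondary care is needed in the bookkeeping of KL directions and reference measures so that the clean quantities $L(\phi)$ (an average under $p_s$) and $\sigma$ emerge, and in the augmented-state device required by the expert's history dependence. As a sanity check, the bound is consistent with Lemma~\ref{lemma_one}: perfect alignment forces $L(\phi)=0$, and the remaining term reflects the intrinsic observation gap $\sigma$ that the encoder alone cannot remove.
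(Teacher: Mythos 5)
Your proposal is correct in structure and reaches the stated bound, but it takes a genuinely different route from the paper: the two decompositions are mirror images of each other. The paper works with the joint law of $(y_k, u^\star_k)$, lifts $u^\star$ to the state prefix $x_{0:k}$ by data processing, and then the KL chain rule splits each per-step divergence into an observation-conditional part, which sums directly to $\sigma$ with coefficient $1$, plus a trajectory-marginal part $d_{\mathrm{KL}}\!\big(p_s(x_{0:k})\,\|\,p_t(x_{0:k})\big)$, which is bounded through the transition kernels (each transition factors through the latent, so data processing bounds it by the conditional latent divergence), producing the factor $\tfrac{\gamma}{1-\gamma}$ on $L(\phi)$. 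You instead work with the joint law of (augmented state, latent): the chain rule extracts the conditional latent term, which collapses \emph{exactly} to $L(\phi)$ with coefficient $1$, and your state-marginal recursion accumulates the observation-model divergence, putting the factor $\tfrac{\gamma}{1-\gamma}$ on $\sigma$ instead. Your centered total-variation step is also a factor of two tighter than the paper's $2\alpha\, d_{\mathrm{TV}}$ bound. Concretely, inside the square root the paper derives $2\sigma + \tfrac{2\gamma}{1-\gamma}L(\phi)$ while you derive $\tfrac12 L(\phi) + \tfrac{\gamma}{2(1-\gamma)}\sigma$; neither dominates the other for all $\gamma$. A further payoff of your framing that the paper's route obscures: since both the loss and the state transitions depend on $y$ only through $l = E_\phi(y)$, you could equally bound your state-marginal recursion by the conditional latent divergence rather than the observation divergence, yielding $d_{\mathrm{KL}}(\mu_s\|\mu_t) \le \tfrac{1}{1-\gamma}L(\phi)$ with no $\sigma$ term at all, which recovers Lemma~\ref{lemma_one} exactly in the aligned case.

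One caveat, which applies equally to the paper's own proof: the final step of ``absorbing the residual constants'' into the symmetric form $\tfrac{2\gamma}{1-\gamma}\big(L(\phi)+\sigma\big)$ is not uniformly valid in $\gamma$. Your derived bound implies the stated one only when $\tfrac12 \le \tfrac{2\gamma}{1-\gamma}$, i.e.\ $\gamma \ge 1/5$; the paper's derived bound implies it only when $2 \le \tfrac{2\gamma}{1-\gamma}$, i.e.\ $\gamma \ge 1/2$. So your absorption actually holds on a wider range of discount factors than the paper's, but for small $\gamma$ both arguments establish a bound with different constants than the theorem states; you should either state the restriction on $\gamma$ explicitly or report the bound in the form you actually derived.
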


\begin{proof}
See Appendix \ref{theorem_one}.
\end{proof}

\begin{remark}
The term $\sigma$ bounds the inherent divergence between the source and target observation models. While this represents an irreducible constant, its magnitude can be kept reasonably small in practice via robust computer vision preprocessing, such as input normalization. See appendix  \ref{sec:remark_justification} for the mathematical justification.
\end{remark}


Theorem~\ref{theorem_one} shows that the target domain imitation loss $\mathcal{J}_t(\theta)$ can be optimized by minimizing the source domain imitation loss $\mathcal{J}_s$ and the alignment loss $L(\phi)$, without access to on-policy data in the target domain.


\begin{proposition} \label{KL-divergence surrogate proposition}
For the sake of Majorization–Min, we propose the following as a surrogate for the joint objective in~\eqref{imitation_learning_surrogate}:

\begin{equation}
\min_{\theta}\;
\mathcal{J}_s(\theta) \;+\;
L(\phi).
\end{equation}
\end{proposition}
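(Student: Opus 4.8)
The plan is to exhibit $\mathcal{J}_s(\theta) + L(\phi)$ as the minimization step of a Majorization–Minimization scheme whose majorizer is supplied by Theorem~\ref{theorem_one}. First I would recall the MM principle: to minimize the true joint objective $f(\theta) \triangleq \mathcal{J}_s(\theta) + \mathcal{J}_t(\theta)$ from~\eqref{imitation_learning_surrogate}, it suffices to construct a surrogate that upper bounds $f$ and agrees with it at the current iterate $\theta^{(n)}$; minimizing the surrogate then cannot increase $f$. Applying Theorem~\ref{theorem_one} to the $\mathcal{J}_t$ term immediately gives the global upper bound
\begin{equation}
f(\theta) \;\le\; 2\,\mathcal{J}_s(\theta) \;+\; \alpha\sqrt{\tfrac{2\gamma}{1-\gamma}\bigl(L(\phi)+\sigma\bigr)},
\end{equation}
so the content of the proposition reduces to simplifying the right-hand side into the claimed form that is linear in $L$.

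Next I would linearize the square root, which is the technical heart of the argument. Using the elementary variational inequality $\sqrt{z} \le \tfrac{z}{2b} + \tfrac{b}{2}$, valid for every $b>0$ with equality exactly at $b=\sqrt{z}$, I set $z = \tfrac{2\gamma}{1-\gamma}(L(\phi)+\sigma)$ and fix $b = \sqrt{z^{(n)}}$ at the current iterate to enforce tangency in the $L$ direction. This replaces the concave square root by an affine function of $L(\phi)$ and yields a majorizer of the form $2\,\mathcal{J}_s(\theta) + c\,L(\phi) + \text{(const)}$, where $c = \tfrac{\alpha\gamma}{(1-\gamma)b} > 0$ and the additive constant collects the $\sigma$- and $b$-terms. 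Since the constant is independent of $\theta$ it drops out of the minimization, and since $c>0$ the minimizer is invariant under positive rescaling of the two summands; folding these positive factors into the step size (equivalently, into a penalty weight) leaves the canonical equally-weighted surrogate $\mathcal{J}_s(\theta) + L(\phi)$, exactly the proposed objective.

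The main obstacle I anticipate is reconciling the coefficients with the tightness that a strict MM guarantee demands. The bound $\alpha = \sup_{y \in \mathcal{Y},\, u^{*}\in\mathcal{U}} d(\pi_\theta(y),u^{*})$ depends on $\theta$, so to keep a clean $\theta$-independent majorizer I would first bound it uniformly by some $\bar\alpha \ge \alpha(\theta)$ — available whenever $d$ is bounded on $\mathcal{U}$, e.g. a clipped or normalized action metric — and substitute $\bar\alpha$ for $\alpha$; this preserves the inequality but loosens tangency. Likewise, Theorem~\ref{theorem_one}'s bound on $\mathcal{J}_t$ is an inequality rather than an equality away from perfect alignment, so the surrogate majorizes $f$ globally but need not touch it at $\theta^{(n)}$. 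I would therefore state the guarantee in its honest form: minimizing $\mathcal{J}_s(\theta)+L(\phi)$ provably decreases the upper bound on $f$, and hence controls $\mathcal{J}_t$ through $\mathcal{J}_s$ and the alignment loss, with the ratio between the two summands treated as a tunable weight rather than pinned to the exact value $c$ produced by the linearization.
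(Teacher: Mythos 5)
Your proposal is correct in substance, and it goes genuinely further than the paper does: the paper states this proposition without any proof at all, treating it as an algorithmic proposal whose justification is purely implicit --- Theorem~\ref{theorem_one} gives an upper bound on $\mathcal{J}_t(\theta)$ that is monotonically increasing in $L(\phi)$, so driving down $\mathcal{J}_s(\theta)$ and $L(\phi)$ drives down a surrogate for the intractable objective~\eqref{imitation_learning_surrogate}. Your square-root linearization $\sqrt{z} \le \tfrac{z}{2b} + \tfrac{b}{2}$ is precisely the missing step that turns this monotonicity heuristic into an actual majorize--minimize construction with a majorizer affine in $L(\phi)$, and your closing caveats (the bound of Theorem~\ref{theorem_one} need not be tight at the current iterate, so one only obtains descent on the bound rather than on $\mathcal{J}_s + \mathcal{J}_t$; $\alpha$ depends on $\theta$ and must be replaced by a uniform bound) are exactly the honest qualifications the paper omits. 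One slip to flag: your claim that the minimizer is ``invariant under positive rescaling of the two summands'' is false when the summands are rescaled by \emph{different} constants --- what your derivation actually yields is $\min_\theta\, \mathcal{J}_s(\theta) + \lambda\, L(\phi)$ with $\lambda$ determined by the linearization point (and contaminated by the unknown constant $\sigma$ hidden in $b$), not the equally weighted objective stated in the proposition. This is arguably a feature rather than a bug: the paper's own algorithm (Algorithm~\ref{alg:adv_policy_improvement}) optimizes $\mathcal{J}_s + \lambda\, \mathcal{J}_{\mathrm{adv}}$ with a tunable trade-off $\lambda$, so your weighted form is the more faithful statement of what can actually be justified, and you correctly conclude by treating the ratio of the two terms as a hyperparameter rather than pinning it to one.
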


\section{Proposed Approach}\label{sec:proposed-approach}
In this section, we propose minimizing the alignment loss $L(\phi)$ in~\eqref{eq:alignment-loss} via adversarial learning, and present State-Conditional Adversarial Learning (SCAL), which is an augmentation of traditional imitation learning frameworks for transfer learning through alignment. 

\subsection{Discriminator-based Off-policy Evaluation}
In practice, we estimate it by training a discriminator $Q_{\psi}$ parameterized with $\psi$ to distinguish between $(l, x)$ pairs sampled from $\mathcal{B}_s$ and $\mathcal{B}_t$. Note that we can sample $(l, x)$ pairs from the data buffer by first sampling $(y, x)$ pairs and then applying the visual encoder $E$ of $\pi_{\theta}$ to observations. The optimization of $Q_{\psi}$ can be framed as follow:
\begin{equation}\label{discriminator_objective}
\begin{aligned}
\psi^* = \arg \min_{\psi} \{\frac{1}{\vert \mathcal{B}_{\text{t}} \vert}  &\sum_{(y, x) \sim \mathcal{B}_{\text{t}}}
\log (1 - Q_{\psi}(E_{\phi}(y), x)) \\ &+ 
\frac{1}{\vert \mathcal{B}_{\text{s}} \vert} \sum_{(y, x) \sim \mathcal{B}_{\text{s}}}
\ \log Q_{\psi}(E_{\phi}(y), x) \} 
\end{aligned}
\end{equation}

\begin{proposition} \label{KL_divergence estimation proposition}
Given a discriminator $Q_{\psi}$ trained based on \eqref{discriminator_objective}, we have
\begin{align*}
& \mathbb{E}_{p_s(x \mid \pi_\theta)}\!\left[
    d_{\mathrm{KL}}\!\left(
    p_s(l \mid x, \pi_\theta) \,\big\|\, p_t(l \mid x, \pi_\theta)
    \right)
    \right] \\ \approx&
    \frac{1}{\vert \mathcal{B}_{\text{s}} \vert} \sum_{(y, x) \sim \mathcal{B}_{\text{s}}}
    \log \frac{Q_{\psi^*}(E_{\phi}(y), x)}{1 - Q_{\psi^*}(E_{\phi}(y), x)} \frac{\widehat{p_{\mathcal{B}_t}(x)}}{\widehat{p_{\mathcal{B}_s}(x)}}
\end{align*}
where $\widehat{p_{\mathcal{B}_t}(x)}$ and $\widehat{p_{\mathcal{B}_s}(x)}$ are some functions approximating the distributions of $x$ in $\mathcal{B}_s$ and $\mathcal{B}_t$. $\vert \mathcal{B}_{\text{s}} \vert$ represents the cardinality of the source-domain dataset.
\end{proposition}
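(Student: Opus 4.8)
The plan is to reduce the claim to the classical optimal‑discriminator argument of Goodfellow et al., augmented by one extra ingredient—an importance‑weight correction—that is needed precisely because the target buffer is off‑policy. First I would make the two sampling distributions explicit. Treating the empirical sums in \eqref{discriminator_objective} as Monte Carlo estimates of population expectations, a pair $(l,x)$ with $l = E_\phi(y)$ drawn from $\mathcal{B}_s$ has joint density $q_s(l,x) = p_s(l \mid x, \pi_\theta)\, p_{\mathcal{B}_s}(x)$, whereas a pair drawn from $\mathcal{B}_t$ has joint density $q_t(l,x) = p_t(l \mid x, \pi_\theta)\, p_{\mathcal{B}_t}(x)$. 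The crucial structural observation is that the state marginals $p_{\mathcal{B}_s}$ and $p_{\mathcal{B}_t}$ need not coincide, exactly because $\mathcal{B}_t$ is off‑policy; this mismatch is what later forces the appearance of the ratio $\widehat{p_{\mathcal{B}_t}(x)}/\widehat{p_{\mathcal{B}_s}(x)}$.

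Next I would view the population limit of \eqref{discriminator_objective} as a binary cross‑entropy objective and invoke the standard variational characterization of its optimum: pointwise in $(l,x)$, setting the first variation of the per‑point integrand $q_t \log(1-Q) + q_s \log Q$ to zero yields the Bayes posterior
\[
Q_{\psi^*}(l,x) = \frac{q_s(l,x)}{q_s(l,x) + q_t(l,x)}.
\]
Rearranging the logit gives $\frac{Q_{\psi^*}}{1-Q_{\psi^*}} = \frac{q_s}{q_t} = \frac{p_s(l \mid x, \pi_\theta)}{p_t(l \mid x, \pi_\theta)} \cdot \frac{p_{\mathcal{B}_s}(x)}{p_{\mathcal{B}_t}(x)}$. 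Multiplying through by the marginal ratio isolates the genuinely \emph{state‑conditional} latent ratio,
\[
\frac{Q_{\psi^*}(l,x)}{1-Q_{\psi^*}(l,x)} \cdot \frac{p_{\mathcal{B}_t}(x)}{p_{\mathcal{B}_s}(x)} = \frac{p_s(l \mid x, \pi_\theta)}{p_t(l \mid x, \pi_\theta)},
\]
which is exactly the quantity whose logarithm the KL integrand requires.

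I would then rewrite the target functional \eqref{eq:alignment-loss} as a single expectation against $q_s$. Since $d_{\mathrm{KL}}(p_s(l \mid x, \pi_\theta) \,\|\, p_t(l \mid x, \pi_\theta)) = \mathbb{E}_{p_s(l \mid x, \pi_\theta)}[\log(p_s/p_t)]$, and since DAgger makes $p_{\mathcal{B}_s}(x) \approx p_s(x \mid \pi_\theta)$, the nested expectation collapses to $L(\phi) \approx \mathbb{E}_{(l,x) \sim q_s}[\log(p_s(l \mid x, \pi_\theta)/p_t(l \mid x, \pi_\theta))]$. Replacing this expectation by its empirical average over $\mathcal{B}_s$ and substituting the density‑ratio identity from the previous step—using the surrogate marginals $\widehat{p_{\mathcal{B}_s}}, \widehat{p_{\mathcal{B}_t}}$ in place of the unknown true marginals—produces the displayed estimator, with the logarithm understood to cover the marginal‑ratio factor as well.

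The delicate part is not the variational computation but controlling the slack that makes the statement an approximation rather than an identity: the finite‑capacity discriminator never exactly attains $Q^*$, the empirical average over a small $\mathcal{B}_t$ carries Monte Carlo error, and the marginal estimators $\widehat{p_{\mathcal{B}_s}}, \widehat{p_{\mathcal{B}_t}}$ are themselves only approximate. Conceptually the most interesting—and most error‑prone—step is the off‑policy importance correction: because the target data do not follow $\pi_\theta$, the discriminator alone can recover only the \emph{joint} density ratio, and reweighting by $\widehat{p_{\mathcal{B}_t}}/\widehat{p_{\mathcal{B}_s}}$ is what strips away the spurious state‑marginal factor so that the genuinely state‑conditional KL is recovered. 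I would be explicit that the symbol ``$\approx$'' absorbs all of these approximations, and I would note the sign convention of \eqref{discriminator_objective} so that its stationary point is indeed the Bayes classifier rather than a boundary solution.
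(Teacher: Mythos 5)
Your proposal is correct and follows essentially the same route as the paper's own derivation: the optimal-discriminator (Bayes posterior) characterization from Goodfellow et al., the logit-as-density-ratio identity, the importance correction by the state-marginal ratio $\widehat{p_{\mathcal{B}_t}}/\widehat{p_{\mathcal{B}_s}}$, the DAgger-based assumption $p_{\mathcal{B}_s}(x) \approx p_s(x \mid \pi_\theta)$, and a final Monte Carlo replacement over $\mathcal{B}_s$. Your only (welcome) refinement is making explicit up front the factorizations $q_s(l,x) = p_s(l \mid x, \pi_\theta)\,p_{\mathcal{B}_s}(x)$ and $q_t(l,x) = p_t(l \mid x, \pi_\theta)\,p_{\mathcal{B}_t}(x)$, which the paper leaves implicit when it converts the joint buffer ratio into the state-conditional latent ratio.
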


To understand why this proposition is algorithmically sensible, recall that a discriminator trained following the \eqref{discriminator_objective} is approximating:
\begin{align*}
Q_{\psi^*}(l, \; x) \approx \frac{p_{\mathcal{B}s}(l, x)}{p_{\mathcal{B}s}(l, x) + p_{\mathcal{B}t}(l, x)}
\end{align*}
where $p_{\mathcal{B}s}(l, x)$ and $p_{\mathcal{B}t}(l, x)$ are the distributions underlying the data buffers $\mathcal{B}_s$ and $\mathcal{B}_t$ respectively \cite{gan2014}. Then we can have the following derivation:
\begin{align*}
& \frac{1}{\lVert \mathcal{B}_{\text{s}} \rVert} \sum_{(y, x) \sim \mathcal{B}_{\text{s}}}
    \log \frac{Q_{\psi^*}(E_{\phi}(y), x)}{1 - Q_{\psi^*}(E_{\phi}(y), x)} \frac{\widehat{p_{\mathcal{B}_t}(x)}}{\widehat{p_{\mathcal{B}_s}(x)}} \\
&\approx \mathbb{E}_{p_{\mathcal{B}s}(l, x)}[\log \frac{Q_{\psi^*}(l, x)}{1 - Q_{\psi^*}(l, x)} \frac{p_{\mathcal{B}_t}(x)}{p_{\mathcal{B}_s}(x)}] \\
&\approx \mathbb{E}_{p_{\mathcal{B}s}(l, x)}[\log \frac{p_{\mathcal{B}s}(l, x)}{p_{\mathcal{B}_t}(l, x)} \frac{p_{\mathcal{B}_t}(x)}{p_{\mathcal{B}_s}(x)}] \\
&\approx \mathbb{E}_{{p_s}(l, x \mid \pi_\theta)}[\log \frac{p_{s}(l \mid x, \pi_\theta)}{p_{t}(l \mid x, \pi_\theta)}] \\
&= \mathbb{E}_{p_s(x \mid \pi_\theta)}\!\left[
    d_{\mathrm{KL}}\!\left(
    p_s(l \mid x, \pi_\theta) \,\big\|\, p_t(l \mid x, \pi_\theta)
    \right)
    \right]
\end{align*}

In this work, $Q_{\psi}$ is implemented as a two-layer neural network with $l$ and $x$ concatenated as inputs. $\widehat{p_{\mathcal{B}_t}(x)}$ and $\widehat{p_{\mathcal{B}_s}(x)}$ are implemented as two independent Gaussian Kernel Estimators fitted with data from $\mathcal{B}_t$ and $\mathcal{B}_s$ respectively.

\subsection{Adversarial Learning For Policy Improvements}
We now introduce State-Conditional Adversarial Learning (SCAL) to solve for objective \eqref{imitation_learning_surrogate}. Per propositions \ref{KL-divergence surrogate proposition} and \ref{KL_divergence estimation proposition}, the original intractable objective \eqref{imitation_learning_surrogate} can be optimized by
\begin{align*}
\label{tractable_optimization_problem}
\theta^* &= \arg \min_{\theta} \big\{
\mathcal{J}_{\text{s}}(\theta) + \lambda \; \mathcal{J}_{\text{adv}}(\theta)
\big\},
\end{align*}
where
\begin{align*}
\mathcal{J}_{\text{s}}(\theta) &= 
\mathbb{E}_{(y, x, u^\star) \sim \mathcal{B}_s}
\left[\mathcal{J}(\pi_{\theta}(y), u^*)\right]
\end{align*}
is the source-domain on-policy loss, and 
\begin{align*}
\mathcal{J}_{\text{adv}}(\theta) &=
\mathbb{E}_{(y, x, \cdot) \sim \mathcal{B}_s} \left[
    \log \frac{Q_{\psi^*}(E_{\phi}(y), x)}{1 - Q_{\psi^*}(E_{\phi}(y), x)} \frac{\widehat{p_{\mathcal{B}_t}(x)}}{\widehat{p_{\mathcal{B}_s}(x)}}\right]
\end{align*}
is the domain confusion loss,
and $Q_{\psi*}$ is optimized based on \eqref{discriminator_objective}. Note that $\mathcal{J}_s(\theta)$ can be redefined following other more advanced IL pipelines using the data from $\mathcal{B}_s$ or recollecting data from the source domain. Following the convention of adversarial training, the discriminator and the agent in our implementation are trained iteratively to preserve the expressivity of the discriminator.

\begin{figure}[htbp]
    \centering
    \begin{subfigure}[b]{0.45\linewidth}
        \centering
        \includegraphics[width=1.0\linewidth]{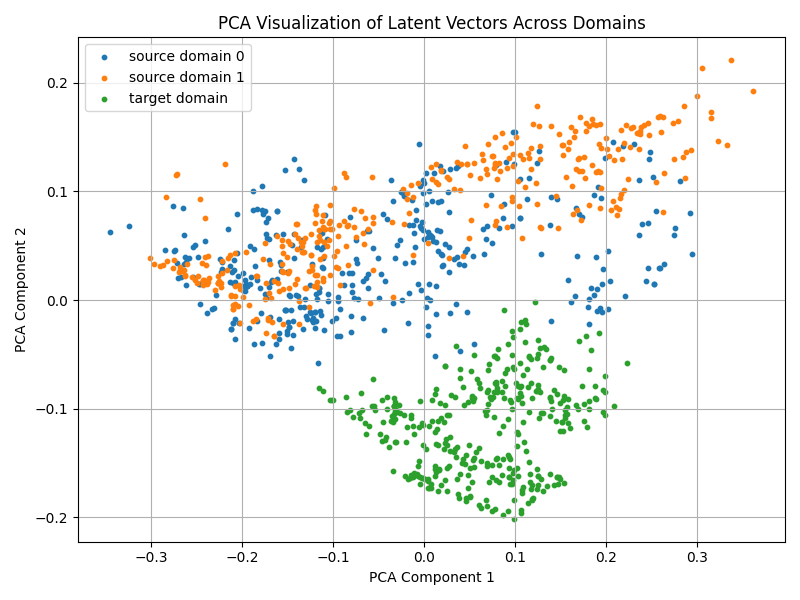}
        \label{fig:pca_before}
    \end{subfigure}
    \hfill
    \begin{subfigure}[b]{0.45\linewidth}
        \centering
        \includegraphics[width=1.0\linewidth]{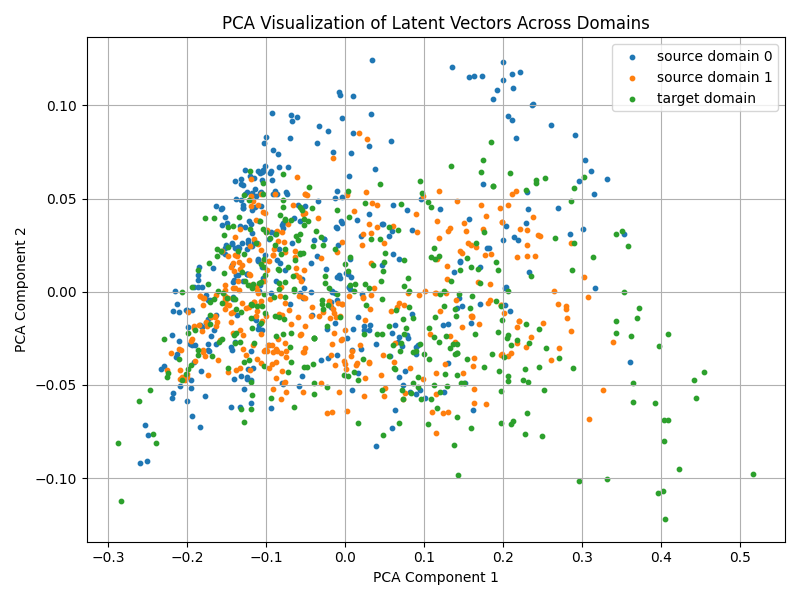}
        \label{fig:pca_after}
    \end{subfigure}
    \caption{PCA visualization of latent space with (left) and without (right) SCAL. The latent vectors presented are sampled from the same path-tracking trajectory.}
    \label{fig:pca_combined}
\end{figure}

\begin{algorithm}[t]
\caption{State-Conditional Adversarial Learning}
\label{alg:adv_policy_improvement}
\begin{algorithmic}[1]
\Require Source buffer $\mathcal{B}_s$, target buffer $\mathcal{B}_t$,
         initial parameters $\theta$, trade-off $\lambda$
\State Fit $\widehat{p_{\mathcal{B}_s}}(x)$ and $\widehat{p_{\mathcal{B}_t}}(x)$
       using the $x$-marginals of $\mathcal{B}_s$ and $\mathcal{B}_t$
\While{not converged}
    \State \textbf{// Update discriminator $Q_\psi$}
    \For{$k = 1, \dots, K_{\mathrm{disc}}$}
        \State Sample minibatch
            $\{(y_i^t, x_i^t)\}$ from $\mathcal{B}_t$
            and $\{(y_j^s, x_j^s)\}$ from $\mathcal{B}_s$
        \State Compute latents
            $l_i^t = E_{\phi}(y_i^t)$ and $l_j^s = E_{\phi}(y_j^s)$
        \State Compute discriminator loss
        \Statex \hspace{\algorithmicindent}
        $\mathcal{J}_{\mathrm{adv}}
        = - \frac{1}{|\mathcal{B}_t|}
           \sum_i \log\bigl(1 - Q_\psi(l_i^t, x_i^t)\bigr)
          - \frac{1}{|\mathcal{B}_s|}
           \sum_j \log Q_\psi(l_j^s, x_j^s)$
        \State Update $\psi \leftarrow
            \psi - \eta_\psi \nabla_\psi \mathcal{J}_{\mathrm{adv}}$
    \EndFor

    \State \textbf{// Update policy and encoder $(\theta)$}
    \State Collect $\mathcal{B}_s$ following the DAgger pipeline
    \State Sample minibatch
        $\{(y_j^s, u^{*}_j, \, x_j^s)\}$ from $\mathcal{B}_s$
    
    \State Compute 
    $\mathcal{J}_{\text{s}}$ 
    \State For each $(y_j^s, x_j^s)$, compute
        $l_j^s = E_{\phi}(y_j^s)$ and
        \Statex \hspace{\algorithmicindent}
        $w_j =
        \log \frac{Q_\psi(l_j^s, x_j^s)}
                 {1 - Q_\psi(l_j^s, x_j^s)}
        \frac{\widehat{p_{\mathcal{B}_t}}(x_j^s)}
             {\widehat{p_{\mathcal{B}_s}}(x_j^s)}$
    \State Compute adversarial loss
        $\mathcal{J}_{\mathrm{adv}}
        = \frac{1}{\vert \mathcal{B}_s \vert} \sum_j w_j$
    \State Compute total loss
        $\mathcal{J}_{\mathrm{total}}
        = \mathcal{J}_{\mathrm{s}}
          + \lambda \mathcal{J}_{\mathrm{adv}}$
    \State Update $\theta \leftarrow
        \theta - \eta_\theta
        \nabla_\theta \mathcal{J}_{\mathrm{total}}$
\EndWhile
\end{algorithmic}
\end{algorithm}


\section{Experiment}\label{sec:experiments}
In this section, our empirical analysis aims to verify: (A) the validity of the optimization surrogate in proposed by our theoretical analysis; (B) the sample efficiency of our method in an ablation study of various target domain data size and distribution; (C) 
the effectiveness of off-policy transfer in a low-speed-to-high-speed transfer learning experiment.

We base all of our experiment designs on Berkeley Autonomous Racing Car simulation environment (BARC) \citep{cao2025simple}, a racing simulator based-on CARLA. The source-domain system and target domain systems are custom environments with the same track geometry but drastically different visual appearances.

\begin{remark}
    In our experiments, the vehicle dynamics are kept identical between the source and target environments to isolate and highlight the effect of the proposed approach.
\end{remark}

\begin{figure}[htbp]
    \centering
    \begin{subfigure}[b]{0.45\linewidth}
        \centering
        \includegraphics[width=1.0\linewidth]{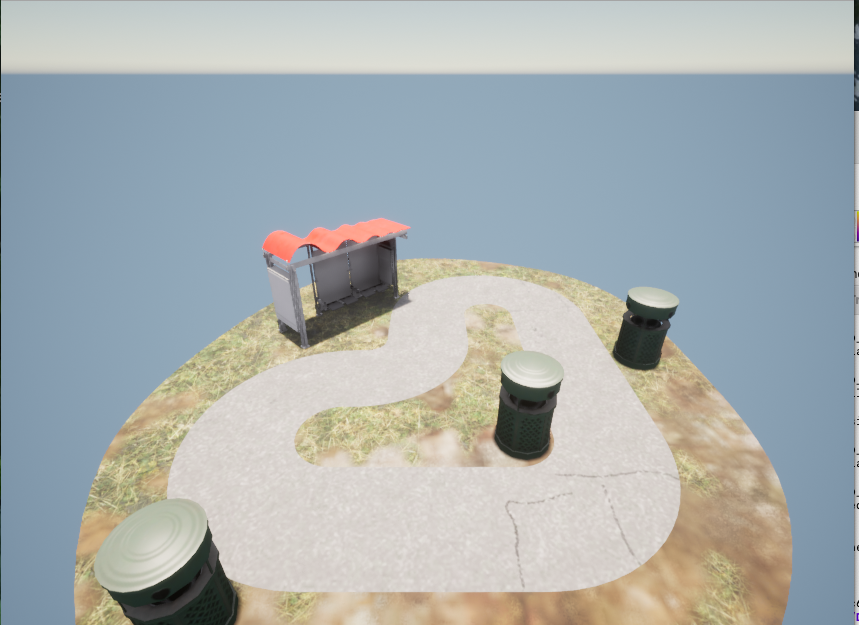}
        \label{fig:domain_demo1}
    \end{subfigure}
    \hfill
    \begin{subfigure}[b]{0.45\linewidth}
        \centering
        \includegraphics[width=1.0\linewidth]{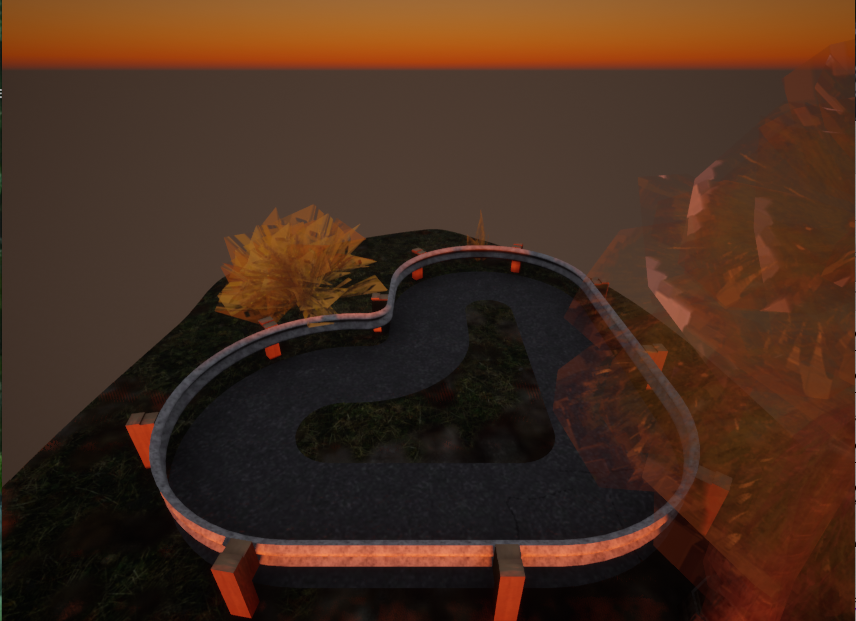}
        \label{fig:domain_demo2}
    \end{subfigure}
    \caption{Two example domains in our experiments with the same track shape but drastically different visual characters.}
    \label{fig:environments}
\end{figure}

\subsection{Upper-bound Empirical Validation} \label{Off-Policy Evaluation Study}
First, we empirically validate the relationship between $\mathcal{J}_t(\theta)$ in \eqref{imitation_learning_surrogate} and our derived upper bound \[
\mathcal{J}_s(\theta) \;+\;
\mathbb{E}_{p_s(x \mid \pi_\theta)}\!\left[
    d_{\mathrm{KL}}\!\left(
        p_s(l \mid x, \pi_\theta)
        \,\big\|\,
        p_t(l \mid x, \pi_\theta)
    \right)
\right],
\] 
which is the proposed surrogate from our theoretical analysis.

We trained 
20 agents in different source domains until the imitation loss $\mathcal{J}_s({\theta_i})$ converge to near zero for all agents. The supervision for all these agents comes from the same black-boxed PID controller. The value of the State-Conditional KL-divergence \[\mathbb{E}_{p_s(x \mid \pi_{\theta_i})}\!\left[
    d_{\mathrm{KL}}\!\left(
        p_s(l \mid x, \pi_{\theta_i})
        \,\big\|\,
        p_t(l \mid x, \pi_{\theta_i})
    \right)
\right]\] is then calculated based on Proposition \ref{KL_divergence estimation proposition} for each agent. Then, without any further training, $\mathcal{J}_t(\theta_i)$ is estimated from the on-policy imitation loss and the on-policy trajectory length in the target domain.
 
As illustrated by figure \ref{fig:OPE_visualization}, with each agent achieving near-zero source domain on-policy loss, a strong correlation is demonstrated between the estimated state-conditional KL divergence and the agent's on-policy behavior in the target domain.

\begin{figure}[htbp]
    \centering
    \includegraphics[width=1.0\linewidth]{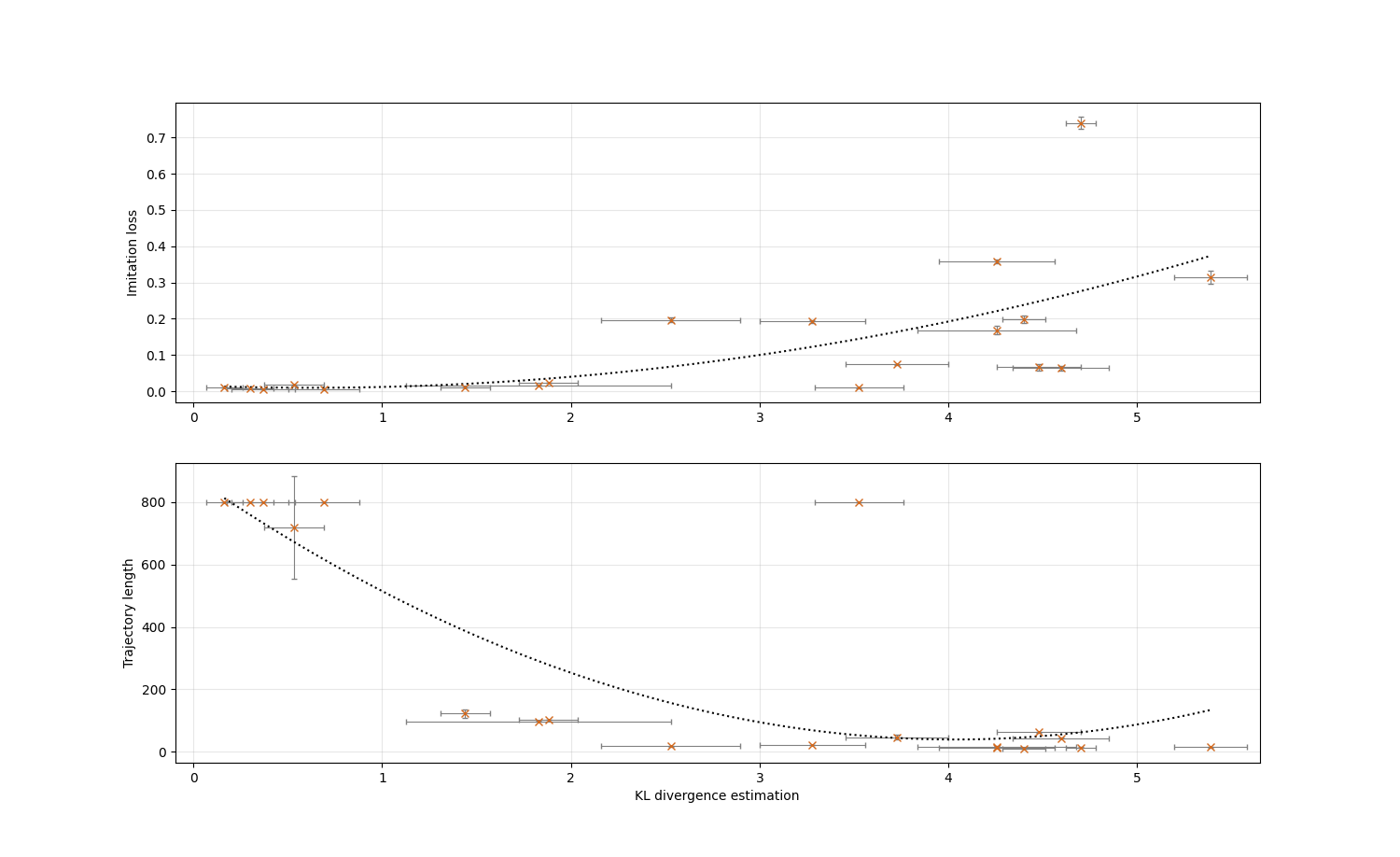}
    \caption{Correlation between estimated State-Conditional KL divergence and On-policy target domain metric.}
    \label{fig:OPE_visualization}
\end{figure}

\subsection{Ablation Study of Target-Domain Data Efficiency} \label{Distributional-shift Study}

We are interested in the sample efficiency of our framework under different distributions of $\mathcal{B}_t$. We choose $\pi_{\beta}$ to be a path tracking PID controller. We predefine three different state space distributions to collect $\mathcal{B}_t$. Following each distribution, we collected $\mathcal{B}_s$ with the following sizes: ${2048, 1024, 512, 256, 213, 170, 128}$. For each $\mathcal{B}_t$, five independent trials of training are conducted following our pipeline, and we compare the maximum collision-free trajectory length achieved by the agent in the target domain.

As a reference for performance, we include DAgger \citep{ross11noregret} with direct access to full information in the target domain as a baseline. 
More specifically, we directly train this baseline under a fully supervised and online setting with access to expert demonstrations in the target domain. 
We use this baseline as an approximation for the upper bound for the sample efficiency for off-policy transfer learning algorithms.

As illustrated in Figure \ref{fig:datasize_exp}, despite relying solely on an offline buffer without expert supervision, our approach achieves comparable or even superior trajectory lengths compared to DAgger under all three off-policy $\mathcal{B}_t$ distributions. Notably, SCAL maintains strong performance and stability even in low-data regimes (e.g., with only 256 target samples), demonstrating its competitive sample efficiency. 
In addition, the experimental result also shows the robustness of our methods to various distributions $\mathcal{B}_t$. 

\begin{figure}[htbp]
    \centering
    \includegraphics[width=1.0\linewidth]{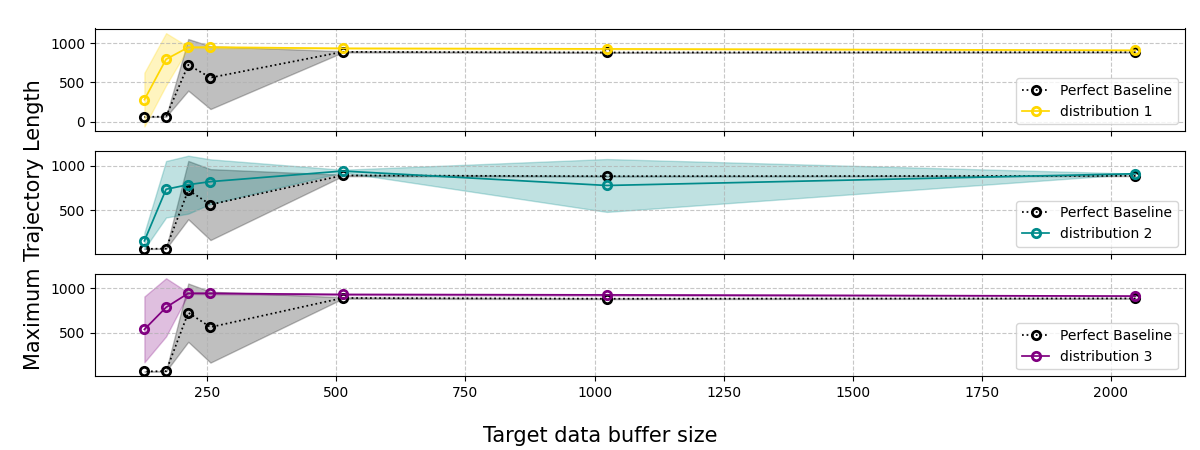}
    \caption{SCAL (yellow, blue, purple) compared with perfect baseline(black) under different $\mathcal{B}_s$ distributions. x-axis: Target-domain buffer size. y-axis: Maximum trajectory length achieved in the target domain. The shaded area represents variance. }
    \label{fig:datasize_exp}
\end{figure}

\begin{figure}[htbp]
    \centering
    \includegraphics[width=1.0\linewidth]{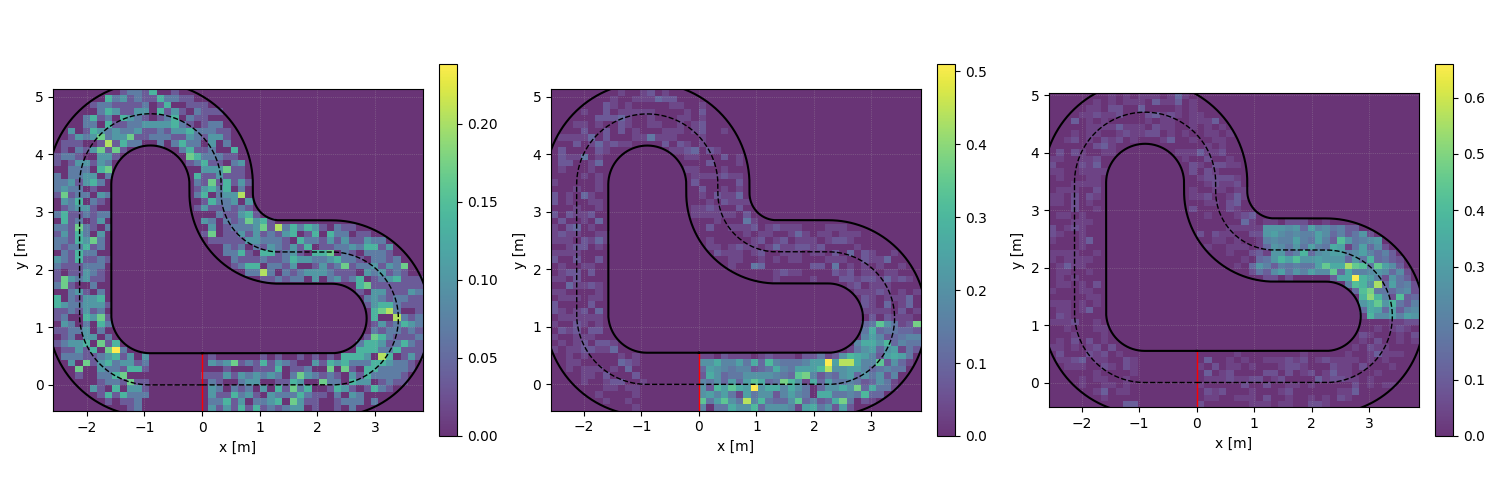}
    \caption{Three different target-domain off-policy sample distributions used in experiment \ref{Distributional-shift Study}. The brighter area stands for states sampled with higher frequency.}
    \label{fig:sample_distribution}
\end{figure}

\subsection{Low-Speed-to-High-Speed Transfer} \label{Low-Speed-to-High-Speed Transfer}
In this experiment, we empirically validate the effectiveness of our approach under off-policy data in a safety-critical high-speed racing task. 
The expert policy is a high-performing MPCC-Conv controller used in~\citep{cao2025simple}, while the target-domain dataset is collected using a conservative, low-speed PID controller. The discriminator $Q_\psi$ is conditioned on the full state $x$. The objective is to train $\pi_\theta$ to be able to race at high speed in both environments without supervision and on-policy data in the target domain.

Two key challenges in this task are: (1) a significant distributional discrepancy between $\mathcal{B}_s$ and $\mathcal{B}_t$, arising from the distinct trajectory characteristics of the MPCC-Conv and PID controllers; and (2) the heightened sensitivity of high-speed imitation learning to small prediction errors, which demands precise policy alignment. 

As shown in Fig.~\ref{fig:demonstration of low-speed-to-high-speed transfer}, the proposed framework can preserve high imitation accuracy and stable performance despite substantial distribution shifts.

\begin{figure}[htbp]
    \centering
    \includegraphics[width=1.0\linewidth]{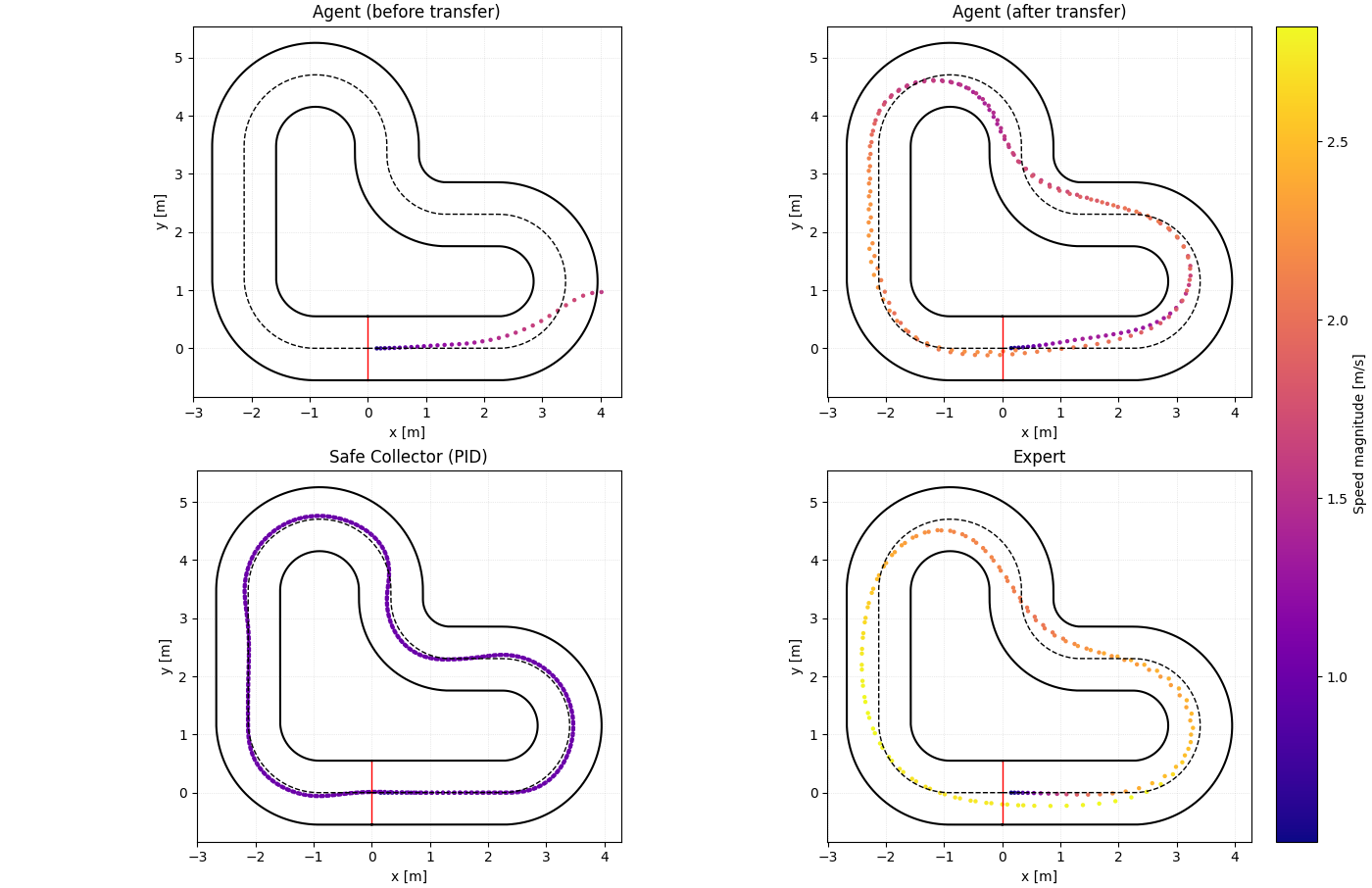}
    \caption{low-speed-to-high-speed transfer. Successful transfer of high-speed policy (top right) with off-policy low-speed data(down left).}
    \label{fig:demonstration of low-speed-to-high-speed transfer}
\end{figure}

\section{Future Works}
As future work, we plan to extend our State-Conditional Adversarial Learning (SCAL) framework to domains with higher-dimensional state and action spaces, such as agile locomotion and dexterous manipulation, to further validate the scalability of our approach.
In addition, we aim to extend this framework to domain transfer tasks with dynamics discrepancies.                                                          

\bibliographystyle{plainnat}  
\bibliography{ref}         

\section{Appendix}

\subsection{proof appendix for Lemma \ref{lemma_one}}
This section aims to proof the correctness of \ref{lemma_one}

\begin{proof}
Note that the expert can be viewed as a history-dependent policy
\[
\pi_{\beta} : \bigcup_{k\ge 0} \mathcal{X}^{k+1} \to \mathcal{U}, 
\qquad
u^*_k = \pi_{\beta}(x_{0:k}),
\]
For domain $d\in\{s,t\}$, the discounted imitation loss can be framed as
\[
\mathcal{J}_d(\theta)
= \sum_{k=0}^{\infty} (1-\gamma)\gamma^k\mathbb{E}_{p^k(y_k, u^*_k \mid \pi_{\theta})}\!\left[
  \,\mathcal{L}\!\left(\pi_{\theta}(y_k),\, u^*_k\right)\right],
\]
where the expert action is $u^*_k = \pi_{\beta}(x_{0:k})$.

We use the following generative model in domain $d$:
\begin{align*}
x_0 \sim p_0,\qquad
y_k \sim e_d(\cdot\mid x_k),\qquad
l_k = E(y_k),\qquad \\
u_k = D(l_k),\qquad
x_{k+1} = f(x_k,u_k),\qquad
u^*_k = \pi_{\beta}(x_{0:k}).
\end{align*}

At time step $k$, the imitation loss term can be written as
\[
\mathcal{L}\!\left(\pi_{\theta}(y_k), u^*_k\right)
= \mathcal{L}\!\left(D(l_k), \pi_{\beta}(x_{0:k})\right).
\]

Conditioned on the prefix $x_{0:k}$, the expert action is deterministic, while
$l_k$ depends only on $x_k$. Thus
\[
p_d(l_k \mid x_{0:k},\pi_{\theta})
= p_d(l_k \mid x_k,\pi_{\theta}).
\]
Define the per-prefix surrogate loss
\[
g_d(x_{0:k})
:= \mathbb{E}_{l_k \sim p_d(l\mid x_k,\pi_{\theta})}
   \!\left[\mathcal{L}\!\left(D(l_k), \pi_{\beta}(x_{0:k})\right)\right].
\]
Then
\[
\mathcal{J}_d(\theta)
= \sum_{k=0}^{\infty} (1-\gamma)\gamma^k\,
  \mathbb{E}_{x_{0:k}\sim p_d(x_{0:k}\mid\pi_{\theta})}
  \!\left[g_d(x_{0:k})\right].
\]

Using the definition of alignment, the encoder $E_{\phi}$ of the agent can induce
\[
p_s(l\mid x, \pi_{\theta}) = p_s(l\mid x, E_{\phi}) = p_t(l\mid x, E_{\phi}) = p_t(l\mid x, \pi_{\theta}),
\]
we obtain, for any prefix $x_{0:k}$,
\begin{align*}
g_s(x_{0:k})
&= \mathbb{E}_{l_k \sim p_s(l\mid x_k,\pi_{\theta})}
   \!\left[\mathcal{L}\!\left(D(l_k), \pi_{\beta}(x_{0:k})\right)\right] \\
&= \mathbb{E}_{l_k \sim p_t(l\mid x_k,\pi_{\theta})}
   \!\left[\mathcal{L}\!\left(D(l_k), \pi_{\beta}(x_{0:k})\right)\right] \\
&=: g(x_{0:k}).
\end{align*}
Thus $g_s(x_{0:k}) = g_t(x_{0:k}) = g(x_{0:k})$ for all $x_{0:k}$.

For any $x$,
\begin{align*}
p_s(u \mid x,\pi_{\theta})
&= \int \delta\!\left(u - D(l)\right)\,
        p_s(l\mid x,\pi_{\theta})\, dl \\
&= \int \delta\!\left(u - D(l)\right)\,
        p_t(l\mid x,\pi_{\theta})\, dl \\
&= p_t(u \mid x,\pi_{\theta}).
\end{align*}

Since the two domains share $p_0$ and the dynamics $x_{k+1}=f(x_k,u_k)$,
the Markov chains induced by $\pi_{\theta}$ are identical.
By induction on $k$, this yields
\[
p_s(x_{0:k}\mid\pi_{\theta})
= p_t(x_{0:k}\mid\pi_{\theta}),\qquad \forall k.
\]

Combining $g_s(x_{0:k})=g_t(x_{0:k})=g(x_{0:k})$ with
$p_s(x_{0:k}\mid\pi_{\theta}) = p_t(x_{0:k}\mid\pi_{\theta})$, we obtain
\begin{align*}
\mathcal{J}_s(\theta)
&= \sum_{k=0}^{\infty} (1-\gamma)\gamma^k\,
   \mathbb{E}_{x_{0:k}\sim p_s(x_{0:k}\mid\pi_{\theta})}
   \!\left[g(x_{0:k})\right] \\
&= \sum_{k=0}^{\infty} (1-\gamma)\gamma^k\,
   \mathbb{E}_{x_{0:k}\sim p_t(x_{0:k}\mid\pi_{\theta})}
   \!\left[g(x_{0:k})\right] \\
&= \mathcal{J}_t(\theta).
\end{align*}
\end{proof}

\subsection{Proof Appendix for Theorem \ref{theorem_one}}
For the simplicity of notations, in this proof, we use $p(\cdot)$ as a default shorthand for $p(\cdot \mid \pi_{\theta})$ all the visitation distributions. Similarly, we use $p(\cdot \mid x)$ as a short hand for $p(\cdot \mid x, E_{\phi})$. That means, by default, we assume every distribution in this proof is conditioned under the agent $\pi_{\theta}$. Consider the following derivation

\begin{align*}
&|\mathcal{J}_t(\theta) - \mathcal{J}_s(\theta)| \\
&= \Big|
\mathbb{E}_{p_s(y, u^{*})}\!\big[\mathcal{L}(\pi_{\theta}(y), u^{*}) \big]
- \mathbb{E}_{p_t(y, u^{*})}\!\big[\mathcal{L}(\pi_{\theta}(y), u^{*}) \big]
\Big| \\[4pt]
&= \left| 
\int \int \mathcal{L}(\pi_{\theta}(y), u^*)
\big(p_s(y, u^*) - p_t(y, u^*)\big)\ dy \; du^*
\right| \\[4pt]
&\le \alpha \int \int \big|p_s(y, u^*) - p_t(y, u^*)\big|\, dy \; du^* \\[4pt] \tag*{(by the definition of Total Variance)}
&\leq 2\alpha \, d_{\mathrm{TV}}\!\big(p_s(y, u^*),\,p_t(y, u^*)\big) \\[4pt] \tag*{(by Pinsker's Inequality)}
&\leq 2\alpha \sqrt{\tfrac{1}{2}\, d_{\mathrm{KL}}\!\big(p_s(y, u^*)\,\|\,p_t(y, u^*)\big)} \\[4pt]
&= \alpha \, \sqrt{2\, d_{\mathrm{KL}}\!\big(p_s(y, u^*)\,\|\,p_t(y ,u^*)\big)}. \\
&\text{where} \quad \alpha = \sup_{y \in \mathcal{Y}, \, u^* \in \mathcal{U}}\mathcal{L}\big(\pi_{\theta}(y), u^*\big) 
\end{align*}

Then the problem suffices to find an upper bound for $d_{\mathrm{KL}}\!\big(p_s(y,u^*)\,\|\,p_t(y,u^*)\big)$. Note that by the definition $\gamma$-discounted distribution and the convexity of $d_{\text{KL}}$. We can obtain the following:
\begin{align*}
&d_{\mathrm{KL}}\!\big(p_s(y,u^*)\,\|\,p_t(y,u^*)\big) \\
&\leq (1- \gamma) \sum_{k = 0}^{\infty}\gamma^k d_{\mathrm{KL}}\!\big(p^k_s(y_k,u^*_k)\,\|\,p^k_t(y_k,u^*_k)\big) \\
& \leq (1- \gamma) \sum_{k = 0}^{\infty}\gamma^k d_{\mathrm{KL}}\!\big(p^k_s(y_k,x_{0:k})\,\|\,p^k_t(y_k,x_{0:k})\big)
\end{align*}
To understand why the second inequality holds, recall that the joint distributions $p_d^k(y, u^*) \; \forall d \in \{ s, t\}$ are pushed-forward distributions obtained by applying the function $u^* = \pi_{\beta}(x_{0:k})$ to the distributions $p_d^k(y, x_{0:k}) \; \forall d \in \{ s, t\}$. Thus, the second inequality holds by the Data Process Theorem.

Since $p^k_s(y_k,x_{0:k}) = e_s(y_k \mid x_k) \; \cdot \; p^k_s(x_{0:k})$ and $p^k_t(y_k,x_{0:k}) = e_t(y_k \mid x_k) \; \cdot \; p^k_t(x_{0:k})$ by the chain rule of KL divergence, we will have the following:

\begin{align*}
&d_{\mathrm{KL}}\!\big(p^k_s(y,x_{0:k})\,\|\,p^k_t(y,x_{0:k})\big) \\
&= \mathbb{E}_{p_s^k(x)}[d_{\text{KL}}(e_s(\cdot \mid x) \| e_t(\cdot \mid x))] + d_{\text{KL}}(p_s(x_{0:k}) \|p_t(x_{0:k}))
\end{align*}

Together, we will have the following upper bound for $d_{\mathrm{KL}}\!\big(p_s(y,u^*)\,\|\,p_t(y,u^*)\big)$.

\begin{align*}
&d_{\mathrm{KL}}\!\big(p_s(y,u^*)\,\|\,p_t(y,u^*)\big) \\
&\leq (1- \gamma) \sum_{k = 0}^{\infty}\gamma^k \{ \mathbb{E}_{p_s^k(x)}[d_{\text{KL}}(e_s(\cdot \mid x) \| e_t(\cdot \mid x))] \\ 
&+ d_{\text{KL}}(p_s(x_{0:k}) \| p_t(x_{0:k}))\} \\
&= \underbrace{(1- \gamma) \sum_{k = 0}^{\infty}\gamma^k \{ \mathbb{E}_{p_s^k(x)}[d_{\text{KL}}(e_s(\cdot \mid x) \| e_t(\cdot \mid x))] \}}_{\text{Part A}} \\
&+ \underbrace{(1- \gamma) \sum_{k = 0}^{\infty}\gamma^k \{d_{\text{KL}}(p_s(x_{0:k}) \|p_t(x_{0:k}))\}}_{\text{part B}}
\end{align*}
The problem now suffices to find compact bound notations for part A and part B.

\paragraph{Part A}
Based on the definition of $\gamma$-discounted distribution, part A can be re-formulated:
\begin{align*}
&(1- \gamma) \sum_{k = 0}^{\infty}\gamma^k \{ \mathbb{E}_{p_s^k(x)}[d_{\text{KL}}(e_s(\cdot \mid x) \| e_t(\cdot \mid x))] \} \\
&= \mathbb{E}_{p_s(x)}[d_{\text{KL}}(e_s(\cdot \mid x) \| e_t(\cdot \mid x))]
\end{align*}
Note that this term depicts the distributional discrepancy determined by the observation model, which is mostly not optimizable. In the following proof, we will refer this term as a constant $\sigma$.

\paragraph{Part B}
Consider the following:
\begin{align*}
&d_{\text{KL}}(p_s(x_{0 : k}) \; \| \; p_t(x_{0:k})) \\
&= d_{\text{KL}}(p^{0}_s(x) \; \| \; p^{0}_t(x)) \; \\ &+ \sum_{i = 0}^{k - 1} \mathbb{E}_{p^i_s(x_i)}[d_{\text{KL}}(p^i_s(x_{i+1} \mid x_i) \; \| \; p^i_t(x_{i+1} \mid x_i)] \\
&= \sum_{i = 0}^{k - 1} \mathbb{E}_{p^i_s(x_i)}[d_{\text{KL}}(p^i_s(x_{i+1} \mid x_i) \; \| \; p^i_t(x_{i+1} \mid x_i)]\\
\end{align*}

The first equality is by the chain rule of KL divergence. The second equality is by the assumption that both domains share the same initial state distribution. 

Note that $p^i_t(x_{i+1} \mid x_i) = p^i_t(x_{i + 1} \mid l) \; \cdot \; p^i_t(l \mid x_i) = \delta(x_{i+1}, D_\theta(l))p^i_t(l \mid x_i)$, where $\delta$ is Kronecker function. Thus, $p^i_t(x_{i+1} \mid x_i)$ can be viewed as a distribution obtained by applying channel $\delta(x_{i+1}, D_\theta(l))$ to the distribution $p_t(l \mid x_i)$. Then, by the Data Process Theorem, we will have $d_{\text{KL}}(p^i_s(x_{i+1} \mid x_i) \; \| \; p^i_t(x_{i+1} \mid x_i) \; \leq \; d_{\text{KL}}(p^i_s(l \mid x_i) \; \| \; p^i_t(l \mid x_i) \; \forall i, \; x_i$. With this fact, we can further refine the upper bound for $d_{\text{KL}}(p_s(x_{0:k}) \; \| \; p_t(x_{0:k}))$:
\begin{align*}
d_{\text{KL}}(p_s(x_{0:k}) \; \| \; p_t(x_{0:k)}])) \\
\leq \sum_{i = 0}^{k - 1} \mathbb{E}_{p^i_s(x)}[d_{\text{KL}}(p^i_s(l \mid x) \; \| \; p^i_t(l \mid x)]
\end{align*}
 Now, plug this back to the expression of Part B, we will get:
 \begin{align*}
&(1- \gamma) \sum_{k = 0}^{\infty}\gamma^k \{d_{\text{KL}}(p_s(x_{0:k}) \|p_t(x_{0:k}))\} \\
&\leq (1 - \gamma) \sum_{t \geq 0}^\infty \gamma^t \;\sum_{i = 0}^{t - 1} \mathbb{E}_{p^i_s(x)}[d_{\text{KL}}(p^i_s(l \mid x) \; \| \; p^i_t(l \mid x) \\
&= (1 - \gamma) \;\sum_{i = 0}^{\infty} \mathbb{E}_{p^i_s(x)}[d_{\text{KL}}(p^i_s(l \mid x) \; \| \; p^i_t(l \mid x) \sum_{t \geq i+1}^\infty \gamma^t \\
&= (1 - \gamma) \;\sum_{i = 0}^{\infty} \mathbb{E}_{p^i_s(x)}[d_{\text{KL}}(p^i_s(l \mid x) \; \| \; p^i_t(l \mid x)]\frac{\gamma^{i+1}}{1-\gamma} \\
&= \sum_{i = 0}^{\infty} \gamma^{i+1} \mathbb{E}_{p^i_s(x)}[d_{\text{KL}}(p^i_s(l \mid x) \; \| \; p^i_t(l \mid x)] \\
&= \frac{\gamma}{1-\gamma} \mathbb{E}_{p_s(x)}[d_{\text{KL}}(p_s(l \mid x) \; \| \; p_t(l \mid x)]
\end{align*}

\paragraph{Conclusion}
Putting all the things together, we will get the full upper bound:
\begin{align*} \label{eq:upper_bound}
&\mathcal{J}_t(\theta) \;\leq\; 
\mathcal{J}_s(\theta)
+ \\ &\alpha \sqrt{ 
    \frac{2\gamma}{1 - \gamma} \, (
    \mathbb{E}_{p_s(x \mid \pi_{\theta})}[d_{\mathrm{KL}}\!\left(p_s(l \mid x, \pi_{\theta}) \,\big\|\, p_t(l \mid x, \pi_{\theta})\right)] 
    + \sigma )}
\end{align*}
where
\begin{itemize}
    \item $\sigma = \mathbb{E}_{p_s(x)}[d_{\mathrm{KL}}\!\big(e_s(\cdot \mid x)\,\|\, e_t(\cdot \mid x)\big)]$,
    \item $\alpha$ is the uniform bound over the loss function, with
    $
    \alpha = \sup_{y \in \mathcal{Y}, \; u^{*} \in \mathcal{U}}
    \mathcal{L}\big(\pi_{\theta}(y), u^{*} \big).
    $
\end{itemize}

\subsection{Mathematical Justification for the magnitude of $\sigma$ in Theorem \ref{eq:upper_bound}}

\label{sec:remark_justification}

In this section, we provide a mathematical justification for the claim that the inherent divergence $\sigma$ between the source and target observation models can be kept reasonably small through robust data preprocessing, such as input normalization.

Recall the definition of the expected conditional Kullback-Leibler (KL) divergence over the target state distribution:
\begin{equation} \label{eq:sigma_def}
    \sigma = \mathbb{E}_{p_t(x)} \left[ d_{\mathrm{KL}}\!\big(e_s(\cdot \mid x) \,\|\, e_t(\cdot \mid x)\big) \right].
\end{equation}

To analyze this term analytically, we assume that the observation models for both the source and target domains follow linear Gaussian distributions. Let $x \in \mathbb{R}^d$ be the underlying state, and let the observation distributions be parameterized as:
\begin{align}
    e_s(o \mid x) &= \mathcal{N}\left(W_s x + b_s, \Sigma_s\right), \\
    e_t(o \mid x) &= \mathcal{N}\left(W_t x + b_t, \Sigma_t\right),
\end{align}
where $W_s, W_t$ are the weight matrices, $b_s, b_t$ are the bias vectors, and $\Sigma_s, \Sigma_t$ are the positive definite covariance matrices for the source and target domains, respectively.

For two $k$-dimensional multivariate Gaussian distributions $\mathcal{N}_0(\mu_0, \Sigma_0)$ and $\mathcal{N}_1(\mu_1, \Sigma_1)$, the KL divergence is given by:
\begin{equation}
\begin{split}
    d_{\mathrm{KL}}(\mathcal{N}_0 \,\|\, \mathcal{N}_1) &= \frac{1}{2} \Biggl( \operatorname{tr}(\Sigma_1^{-1} \Sigma_0) - k + \ln\left(\frac{|\Sigma_1|}{|\Sigma_0|}\right) \\
    &\quad + (\mu_1 - \mu_0)^\top \Sigma_1^{-1} (\mu_1 - \mu_0) \Biggr).
\end{split}
\end{equation}

Substituting our linear Gaussian models into this formula, the conditional KL divergence for a given state $x$ is:
\begin{equation}
\begin{split}
    &d_{\mathrm{KL}}\!\big(e_s(\cdot \mid x) \,\|\, e_t(\cdot \mid x)\big)\\ &= \frac{1}{2} C_{\Sigma} + \frac{1}{2} \big( (W_t - W_s)x \\ &+ (b_t - b_s) \big)^\top \Sigma_t^{-1} \big( (W_t - W_s)x + (b_t - b_s) \big),
\end{split}
\end{equation}
where $C_{\Sigma} = \operatorname{tr}(\Sigma_t^{-1} \Sigma_s) - k + \ln\left(\frac{|\Sigma_t|}{|\Sigma_s|}\right)$ encapsulates the divergence purely due to covariance mismatch.

Let $\Delta W = W_t - W_s$ and $\Delta b = b_t - b_s$. Taking the expectation over the target state distribution $x \sim p_t(x)$ as per Equation~\ref{eq:sigma_def}, we obtain the expanded expression for $\sigma$:
\begin{equation} \label{eq:sigma_expanded}
\begin{split}
    \sigma &= \frac{1}{2} C_{\Sigma} + \frac{1}{2} \mathbb{E}_{p_t(x)} \Biggl[ x^\top \Delta W^\top \Sigma_t^{-1} \Delta W x \\
    &\quad + 2 \Delta b^\top \Sigma_t^{-1} \Delta W x + \Delta b^\top \Sigma_t^{-1} \Delta b \Biggr].
\end{split}
\end{equation}

Let $\mu_x = \mathbb{E}_{p_t(x)}[x]$ and $\Sigma_x = \mathbb{E}_{p_t(x)}[(x - \mu_x)(x - \mu_x)^\top]$ denote the mean and covariance of the states in the target domain. Using the property of expectations of quadratic forms (the trace trick), we can rewrite Equation~\ref{eq:sigma_expanded} exactly as:
\begin{equation} \label{eq:sigma_final}
\begin{split}
    \sigma &= \frac{1}{2} C_{\Sigma} + \frac{1}{2} \operatorname{tr}\left( \Delta W^\top \Sigma_t^{-1} \Delta W (\Sigma_x + \mu_x \mu_x^\top) \right) \\
    &\quad + \Delta b^\top \Sigma_t^{-1} \Delta W \mu_x + \frac{1}{2} \Delta b^\top \Sigma_t^{-1} \Delta b.
\end{split}
\end{equation}

\paragraph{The Role of Normalization}
Equation~\ref{eq:sigma_final} demonstrates exactly how robust normalization bounds $\sigma$. If we apply standard normalization (e.g., zero-mean, unit-variance normalization) to the input states $x$ prior to processing, we enforce that the data is zero-centered ($\mu_x \approx 0$) and has a constrained scale ($\Sigma_x \approx I$). Under these normalized conditions, the terms dependent on the state mean vanish, simplifying the bound to:
\begin{equation}
    \sigma \approx \frac{1}{2} C_{\Sigma} + \frac{1}{2} \operatorname{tr}\left( \Delta W^\top \Sigma_t^{-1} \Delta W \right) + \frac{1}{2} \Delta b^\top \Sigma_t^{-1} \Delta b.
\end{equation}

Furthermore, we can normalize the observation outputs. By standardizing the scales and removing baseline shifts of $y$, the linear transformation mismatch $\Delta W \to 0$ and the bias mismatch $\Delta b \to 0$. Similarly, aligning the observation variance drives $\Sigma_t^{-1}\Sigma_s \to I$, which makes the covariance penalty $C_{\Sigma} \to 0$. Therefore, appropriate ahead-of-time data normalization effectively controls the quadratic growth of the divergence, allowing $\sigma$ to be bounded to a reasonably small constant in practice.

\subsection{Implementation Details for Experiment}
\subsubsection{Agent Architecture Design}
The input to the agent consists of an RGB image \( y \in \mathbb{R}^{224 \times 224 \times 3} \) and the vehicle's velocity \( v \), forming the observation vector \( [y, v]^T \). $v$ is the velocity vector defined as $v = [v_{\text{long}}, v_{\text{tran}}]^T$, where $v_{\text{long}}$ is the longitudinal velocity and $v_{\text{tran}}$ is the lateral velocity in the vehicle's body frame. The output decisions by both the agent and the expert are $[u_a, u_{\text{steer}}]^T$, corresponding to throttle and steering control.
The visual encoder used in our framework is a ResNet-18, which maps the RGB image observation $y$ to a latent vector $l \in \mathbb{R}^{512}$. To balance the dimensionality between the latent vector and the velocity input, the velocity $v$ is first projected into a 16-dimensional space via a linear layer. This transformed velocity vector is then concatenated with the latent vector, resulting in a fused decision vector of dimension 528. This fused vector is passed through a single decision layer to get the output decision.

\subsubsection{Discriminator Architecture Design}
The discriminator is implemented as a two-layer multilayer perceptron (MLP), consisting of a linear layer $\mathbb{R}^{512} \rightarrow \mathbb{R}^{256}$, followed by a ReLU activation layer $\mathbb{R}^{256} \rightarrow \mathbb{R}^{256}$, and a final linear decision layer $\mathbb{R}^{256} \rightarrow \mathbb{R}^{1}$. The output logits of the discriminator are passed through a sigmoid function.

\subsubsection{Gaussian Kernel Estimators}
We implement $\widehat{p_{\mathcal{B}_t}(x)}$ and $\widehat{p_{\mathcal{B}_s}(x)}$ as two independent Gaussian Kernel Estimators fitted with data from $\mathcal{B}_t$ and $\mathcal{B}_s$ respectively. During adversarial transfer learning, they are fitted only once at the start of the training. Then, they are frozen and treated as two fixed weight functions for KL estimation.

\subsubsection{Learning Rates for Transfer Learning}
One main drawback of adversarial learning families lies in highly sensitive and in-robust learning rates. Finding the right learning rates for the discriminator and the agent usually requires huge efforts of hyper tuning. For different experiments we have done, the learning rate usually vary through a wide range. For the future researchers who want to implement this method, we highly encourage them to carefully tune the agent's and the discriminator's learning rates based on their own problem settings.

\subsubsection{State Definition}
The system state \( x \) is defined as:
$
x = [e_\psi, e_s, K(s_0), K(s_1), K(s_2)]^T
$
where \( e_\psi \) is the heading error in the Frenet frame, \( e_s \) is the  deviation from the reference centerline, and \( K(s_0), K(s_1), K(s_2) \) denote the curvatures of the reference trajectory at three discretely sampled Frenet coordinates ahead of the vehicle's current position.

\end{document}